\theoremstyle{definition}
\theoremstyle{remark}
\newtheorem{prop}{Proposition}
\def\BibTeX{{\rm B\kern-.05em{\sc i\kern-.025em b}\kern-.08em
    T\kern-.1667em\lower.7ex\hbox{E}\kern-.125emX}}
\begin{document}

\title{Categorical Approach to Conflict Resolution: Integrating Category Theory into the Graph Model for Conflict Resolution\\
}

\author{\IEEEauthorblockN{1\textsuperscript{st} Yukiko Kato}
\IEEEauthorblockA{\textit{Lynx Research llc},
Tokyo, Japan \\
kato.y.bj@m.titech.ac.jp}
}
\maketitle

\begin{abstract}
This paper introduces the Categorical Graph Model for Conflict Resolution (C-GMCR), a novel framework that integrates category theory into the traditional Graph Model for Conflict Resolution (GMCR). The C-GMCR framework provides a more abstract and general way to model and analyze conflict resolution, enabling researchers to uncover deeper insights and connections. We present the basic concepts, methods, and an application of the C-GMCR framework to the well-known Prisoner's Dilemma and other representative cases. The findings suggest that the categorical approach offers new perspectives on stability concepts and can potentially lead to the development of more effective conflict resolution strategies.

\end{abstract}

\begin{IEEEkeywords}
category theory, conflict resolution, Graph Model for Conflict Resolution
\end{IEEEkeywords}

*\textit{This work has been submitted to IEEE SMC 2023 for possible publication. Copyright may be transferred without notice, after which this version may no longer be accessible.}

\section{Introduction}
Category theory is a branch of abstract mathematics that provides a powerful and unifying framework for understanding and analyzing various mathematical structures. It was introduced by Eilenberg and Mac Lane \cite{b1} as a means to study algebraic topology. Since then, category theory has grown to become a fundamental tool in many areas of mathematics, including algebra, geometry, logic, and computer science.
At its core, category theory deals with objects and the relationships (called morphisms or arrows) between them. A category consists of a collection of objects and a set of morphisms that satisfy specific axioms. The morphisms capture the structure-preserving relationships between objects, making category theory particularly suited for studying and comparing different mathematical structures.

One of the key concepts in category theory is the notion of a functor, which is a mapping between two categories that preserves their structure. Functors allow mathematicians to relate and compare different categories, leading to deeper insights and understanding of the underlying structures.
Another important concept is natural transformations, which describe relationships between functors. Natural transformations are used to express the idea of ``naturality'' or coherence in mathematical structures and provide a way to study how different constructions in mathematics relate to each other.
In recent years, category theory has gained increasing attention and relevance in various fields, especially in computer science and artificial intelligence. It has been applied to programming languages, type theory, semantics, and the design of functional programming languages like Haskell. Moreover, category theory has played a significant role in the development of homotopy type theory, a promising new foundation for mathematics that unifies aspects of logic, topology, and algebra.
The growing interest in category theory is due in part to its ability to provide a unifying language and framework for diverse areas of mathematics and science. As new connections and applications continue to be discovered, category theory is expected to remain an influential and essential tool in the ongoing development of mathematical understanding.

 GMCR \cite{b2,b3} is a decision-analytic approach for modeling and analyzing complex conflicts among multiple decision-makers (DMs). It provides a systematic framework for representing the preferences, options, and strategic interactions among DMs involved in various conflicts, including environmental disputes, international trade negotiations, and resource management problems \cite{r1}. Recent research on GMCR has focused on refining its theoretical foundations, improving its computational efficiency, and exploring its applications to various domains, including the development of advanced software implementations for automated conflict analysis and visualization \cite{r2,r3}. Moreover, GMCR research has integrated other mathematical and computational methods, such as game theory, optimization algorithms, and artificial intelligence techniques, to enhance its modeling capabilities and analytical power \cite{r4}. This interdisciplinary research has expanded the application of GMCR in addressing increasingly complex and dynamic conflict situations, thereby reinforcing its significance in the field of decision analysis and conflict resolution.

 The expressive power of GMCR bears a resemblance to the cognitive mechanism of consciousness underlying human decision-making. Building upon the framework's inherent flexibility and abstraction, we have introduced the notion of representing preferences as binary values of permissible and impermissible \cite{b4}, as well as a novel methodology for assessing the feasibility of states based on four values: T, F, B, and N \cite{b5}. This paper emanates from our awareness of the same question.

The impetus for this paper was prompted by the recognition of conceptual parallels between the GMCR framework, which delineates a conflict situation using a digraph composed of vertices and edges, and category theory. Our contention is that merging the mathematical depictions of these two theories could enhance the range of possibilities for modeling and analysis.
Section \ref{sec_II} of this paper will examine the fundamental concepts and definitions, while Section \ref{sec_III} will elucidate the potential benefits of incorporating category theory into GMCR through the use of illustrative examples.

\section{Concepts and Methods}
\label{sec_II}
\subsection{Category Theory}
The basic definitions include categories, objects, morphisms, and functors. 

\subsubsection{\textbf{Category}}:
A category $\mathcal{C}$ consists of:

\begin{itemize}
    \item A collection of objects: $\operatorname{Ob}(\mathcal{C})$
    \item A collection of morphisms (arrows): $\operatorname{Hom}(\mathcal{C})$
\end{itemize}

The morphisms in $\operatorname{Hom}(\mathcal{C})$ are associated with a domain ($\operatorname{dom}$) and a codomain ($\operatorname{cod}$), such that for every morphism $f \in \operatorname{Hom}(\mathcal{C})$, there exist objects $A, B \in \operatorname{Ob}(\mathcal{C})$ with $f\colon A \to B$, where $A$ is the domain and $B$ is the codomain.

The following axioms must hold:

\begin{itemize}
    \item \textbf{Composition:} For any two morphisms $f\colon A \to B$ and $g\colon B \to C$, there exists a morphism $g \circ f\colon A \to C$ (the composition of $f$ and $g$).
    
    \item \textbf{Associativity:} For any three morphisms $f\colon A \to B$, $g\colon B \to C$, and $h\colon C \to D$, the composition is associative: $(h \circ g) \circ f = h \circ (g \circ f)$.
    
    \item \textbf{Identity:} For each object $A \in \operatorname{Ob}(\mathcal{C})$, there exists an identity morphism $\operatorname{id}_A\colon A \to A$, such that for any morphism $f\colon A \to B$, $f \circ \operatorname{id}_A = f$, and for any morphism $g\colon B \to A$, $\operatorname{id}_A \circ g = g$.
\end{itemize}
\subsubsection{\textbf{Morphisms}}
Morphisms (also called arrows) are structure-preserving maps between objects in a category. Given objects $A$ and $B$, the set of all morphisms from $A$ to $B$ is denoted by $\operatorname{Hom}(A, B)$ or $\mathcal{C}(A, B)$.
\subsubsection{\textbf{Functors}}
A functor $F$ is a mapping between two categories $\mathcal{C}$ and $\mathcal{D}$ that preserves their structure. It consists of two mappings:

\begin{itemize}
    \item $F\colon \operatorname{Ob}(\mathcal{C}) \to \operatorname{Ob}(\mathcal{D})$, which maps objects in $\mathcal{C}$ to objects in $\mathcal{D}$.
    \item $F\colon \operatorname{Hom}(\mathcal{C}) \to \operatorname{Hom}(\mathcal{D})$, which maps morphisms in $\mathcal{C}$ to morphisms in $\mathcal{D}$.
\end{itemize}

The functor must satisfy the following conditions:

\begin{itemize}
    \item For any morphism $f\colon A \to B$ in $\mathcal{C}$, $F(f)\colon F(A) \to F(B)$ in $\mathcal{D}$.
    
    \item For any composition of morphisms $f\colon A \to B$ and $g\colon B \to C$ in $\mathcal{C}$, $F(g \circ f) = F(g) \circ F(f)$ in $\mathcal{D}$.
    
    \item For each object $A \in \operatorname{Ob}(\mathcal{C})$, $F(\operatorname{id}_A) = \operatorname{id}_{F(A)}$ in $\mathcal{D}$.
\end{itemize}

\subsection{GMCR}
GMCR is a framework consisting of four tuples: $(N, S (A_{i})_{i\in N}, ( \succsim _{i})_{i\in N})$.
$N$ is the set of all DMs, $S$ denotes the set of all feasible states. $(S, A_i)$ constitutes DM $i$' s graph $G_i$, where $S$ is the set of all vertices and $A_i \subset S\times S $ is the set of all oriented arcs.
 $(S,A_i)$ has no loops; $(s,s) \in A$ for each $s \in S$.
The preferences of each DM are presented as  $(\succsim _{i})$, where the set of all DMs $ N:|N| \geq 2$, set of all states $ S:|S| \geq 2$, and preference of DM \textit{i} satisfy reflectiveness, completeness, and transitivity.
$ s\succsim _{i} s'$: $s$ is equally or more preferred to $s'$ by DM \textit{i}; $ s\succ _{i} s'$: $s$\ is\ strictly\ preferred\ to\ $s'$\ by\ DM $i$; $ s\sim _{i} s'$: $s$\ is\ equally\ preferred\ to\ $s'$\ by\ DM \textit{i}.

We assume that a rational decision-maker (DM) aims to transition to a more favorable state and attempts to do so by repeatedly making unilateral moves, over which the DM has control. For $i \in N$ and $s \in S$, DM $i$'s reachable list from state s is defined as the set $\{s' \in S \mid (s, s') \in A_i\}$, denoted by $R_i(s)$. $R_i(s)$ represents the set of all states where DM $i$ can move from $s$ to $s'$ in a single step.
A unilateral improvement for DM $i$ from state $s$ is defined as an element of the reachable list of DM $i$ from $s$ (i.e., $s' \in R_i(s)$), where $i$ strictly prefers state $s' (s' \succ_i s)$. Consequently, the set of unilateral improvement lists for DM $i$ from state $s$ is described as $\{s'\in R_i(s) \mid s' \succ_i s\}$ and denoted by $R^+_i(s)$.
$\phi^+_i(s)$ represents the set of all states that are more preferential for DM $i$ than $s$, described as $\{s' \in S \mid s' \succ_i s\}$, and $\phi^\simeq_i(s)$ denotes the set of all states that are at most equally preferential to state $s$, described as $\{s' \in S \mid s \succsim_i s'\}$. Additionally, $R_{N-\{i\}}(s)$ is defined as the set of all states that can be achieved through sequences of unilateral moves by DMs other than DM $i$. Similarly, $R^+_{N-\{i\}}(s)$ is defined as the set of all states that can be achieved through sequences of unilateral improvements by DMs other than DM $i$.

Based on the DMs' state transitions, we can derive standard stability concepts: Nash stability (\textbf{Nash}Nash) \cite{b6,b7}, general meta-rationality (\textbf{GMR}) \cite{b8}, symmetric meta-rationality (\textbf{SMR})\cite{b8} , and sequential stability (\textbf{SEQ})\cite{b9,b10}.

The alignment of the individual components of the GMCR with category theory can be established as in the following subsection.

\subsection{C-GMCR}
\subsubsection{Correspondence of Elements}
States as Objects: In the category-theoretic framework, states in GMCR can be considered as objects. Each state represents a distinct configuration of decision variables for the DMs involved in the conflict.

State Transitions as Morphisms: State transitions can be represented as morphisms between states (objects). These morphisms preserve certain structures, such as the relationship between decision variables and DMs.

Define a category C-GMCR, with states as objects and state transitions as morphisms. This category should satisfy the axioms of composition and identity:

Composition: If there is a morphism (state transition) from state A to state B and another morphism from state B to state C, there must exist a composite morphism (state transition) from state A to state C.
Identity: For each state, there exists an identity morphism (no state transition) that, when composed with any other morphism (state transition) leading to or from that state, yields the same morphism.
Preference Relations and Functors: DM preferences can be captured using functors, which are structure-preserving mappings between categories. In this case, we can define a functor F for each DM that maps states in C-GMCR to an ordered set of preference levels, preserving the preference structure of the DMs.
By incorporating category theory into GMCR, you can gain new insights into the structure and properties of conflict resolution models, allowing for a more sophisticated analysis of conflicts and their resolutions.
\subsubsection{C-GMCR}
To introduce category theory into GMCR, we can define a new framework, ``Categorical GMCR'' (C-GMCR), by establishing basic definitions that incorporate category-theoretic concepts. Here is a proposal for these definitions:
\begin{itemize}
\item C-GMCR: C-GMCR is a model consisting of six tuples including overlapping definitions, $C = \langle N, S, (A_{i})_{i\in N}, ( \succsim _{i})_{i\in N}, \mathcal{C}, \mathcal{M} \rangle$.
\item $N$: A nonempty finite set representing the DMs involved in the conflict. Each DM is denoted by $i \in N$.
 \item C-State (Object) $S$: A C-State is an object in a category that represents a specific configuration of decision variables, corresponding to the outcome of actions each DM can take in a conflict situation.
 Let $S$ be the set of all states representing decision variable configurations in a conflict situation. Each state $s \in S$ is an object in the category $C_\text{conflict}$.
 \item $A_i$: For each DM $i \in N$, $A_i \subseteq S \times S$ represents the set of oriented arcs reflecting the movements controlled in one step by DM $i$. The symbol $\times$ denotes the Cartesian product.
 \item C-State Transition (Morphism), $\mathcal{M}$ : A C-State Transition is a morphism between states (objects) that represents a structure-preserving change in the decision variables, corresponding to a DM moving from one action to another.
A state transition is a morphism $f\colon s_1 \to s_2$ between states (objects) $s1, s2 \in S$. Denote the set of all state transitions as $\mathcal{M} $.
\item C-Preference: C-Preference for a DM $i \in N$ is defined over pairs of morphisms (state transitions) and their compositions.Let $f, g, h$ be morphisms in the category $\mathcal{C}$, such that:

$f : s_i \to s_j$
$g : s_j \to s_k$
$h = g \circ f : s_i \to s_k$

The C-Preference relation $\succ_i^c$ for DM $i$ is a binary relation on the set of morphisms, where $f \succ_i^c g$ means that DM $i$ prefers the state transition represented by the morphism $f$ to the state transition represented by the morphism $g$.

Similarly, the C-indifference relation $\sim_i^c$ for DM $i$ is a binary relation on the set of morphisms, where $f \sim_i^c g$ means that DM $i$ is indifferent between the state transitions represented by the morphisms $f$ and $g$.
The C-Preference and C-indifference relations should satisfy the same properties as the preference relations in GMCR:
\begin{itemize}
\item  $\sim_i^c$ is reflexive and symmetric: $\forall f, g \in \mathcal{M}$, $f \sim_i^c f$ and $f \sim_i^c g \Leftrightarrow g \sim_i^c f$.
\item  $\succ_i^c$ is asymmetric: $\forall f, g \in \mathcal{M}$, $f \succ_i^c g \Leftrightarrow \lnot (g \succ_i^c f)$.
\item  ${\sim_i^c, \succ_i^c}$ is strongly complete: $\forall f, g \in \mathcal{M}$, either $f \sim_i^c g$, $f \succ_i^c g$, or $g \succ_i^c f$.
\end{itemize}
\item C-Conflict (Category), $\mathcal{C}$ : A C-Conflict is a category C$\_{conflict}$, consisting of a collection of states (objects) and state transitions (morphisms) that satisfy specific axioms related to composition and identity. 
A C-Conflict is a category $C_\text{conflict} = (S, \mathcal{M})$, where $S$ is the set of states (objects) and $\mathcal{M}$ is the set of state transitions (morphisms). The category must satisfy the following axioms:
\begin{itemize}
    \item Composition: For any $f\colon s_1 \to s_2$ and $g\colon s_2 \to s_3$ in $\mathcal{M}$, there exists a composite morphism $g \circ f\colon s_1 \to s_3$ in $\mathcal{M}$.
    \item Identity: For each state $s \in S$, there exists an identity morphism $1\_s:s \to s$ in $ \mathcal{M}$ such that for any $f\colon s_1 \to s_2$, $1_{s_2} \circ f = f$ and $f \circ 1_{s_1} = f$.
\end{itemize}

These axioms ensure the coherence of the state transitions, capturing the dynamics of the conflict.

\item
C-GMCR Preference Functor: A C-GMCR preference functor is a structure-preserving mapping between the category $C_\text{conflict}$ and an ordered set of preference levels for each DM. The functor maps states (objects) to their corresponding preference levels while preserving the preference structure of the DMs involved in the conflict.
For DM $i$, we define a preference functor $P_i: C_\text{conflict} \to C\_i$, where $C\_i$ is an ordered set representing preference levels for DM $i$. The functor $P_i$ maps each state $s \in S$ to a preference level $p \in C\_i$, preserving the preference structure:
\begin{itemize}
    
\item $P_i(s_1) \precsim P_i(s_2)$ if and only if DM $i$ prefers $s_2$ over $s_1$ or is indifferent between them.
\item For any state transition $f: s_1 \to s_2$, $P_i(f) = P_i(s_1) \to P_i(s_2)$.
\end{itemize}

We can define the stability concept in C-GMCR as follows:

\begin{itemize}
    \item C-GMCR-Nash : A state $s^*$ in the C-Conflict is a Nash Equilibrium if for all DMs, there is no state transition $f: s^* \to s$ such that $P_i(s) \succ P_i(s^*)$. 
    \item
 C-GMCR-GMR: A state $s^*$ is a GMR Equilibrium in the C-Conflict if for DMs, there is no sequence of state transitions $f_1: s^* \to s_1$, $f_2: s_1 \to s_2, ...$, $f_n: s(n-1) \to s$ such that $P_i(s) \succ P_i(s^*)$ and each DM $j$ in the sequence has $P_j(s_j)\succsim P_j(s_{j-1})$ for $j = 1, ..., n$.
  \item
  C-GMCR-SMR: A state $s^*$  is an SMR Equilibrium in the C-Conflict if for all DMs, there is no sequence of state transitions $f_1: s^* \to s1$, $f_2: s1 \to s2, ..., f_n: s(n-1) \to s$ such that $P_i(s) \succ P_i(s*)$ and each DM $j$ in the sequence has $P_j(s_j) \succ P_j(s_{j-1})$ for $j = 1, ..., n$. 
  \item C-GMCR-SEQ: A state $s^*$  is sequentially stable in the C-GMCR conflict if there exists a sequence of state transitions $f_1: s_0 \to s_1$, $f_2: s_1 \to s_2$, ..., $f_n: s(n-1) \to s^*$ such that each DM $j$ in the sequence has $P_j(s_j) \succ P_j(s_{j-1}) $ for $j = 1, ..., n$, and there is no sequence of state transitions starting from s* that strictly improves the preferences of all decision-makers involved in the sequence.  
\end{itemize}
\item
C-GMCR Equilibrium: 
Equilibrium concepts in GMCR, such as Nash Equilibrium, GMR, SMR, and SEQ, can be redefined in C-GMCR. A C-GMCR equilibrium is a state or a collection of states in the $C_\text{conflict}$ category where no DM can unilaterally improve its preference level through state transitions (morphisms). This concept can be formalized using the preference functors for each DM.

\end{itemize}
By redefining these equilibrium concepts in the C-GMCR framework, we can leverage the category-theoretic perspective to analyze and understand the conflict resolution process, potentially uncovering new properties and strategies for addressing conflicts.
Several propositions can be derived from the definitions in C-GMCR. 

\begin{prop}
If a state transition $f:s_1 \to s_2$ exists such that $P_i(s_1) \prec P_i(s_2)$ for DM $i$, then there exists no state transition $g: s_2 \to s1$ with $P_i(s_2) \succ P_i(s_1)$ for the same DM.\end{prop}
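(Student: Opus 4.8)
The plan is to reduce the claim to the antisymmetry of the order on the target poset $C_i$, transported through the preference functor $P_i$. First I would recall that, by definition, $C_i$ is an \emph{ordered set}, which I regard as a thin category having exactly one morphism $p \to q$ precisely when $p \precsim q$; in particular $\precsim$ is reflexive, transitive and antisymmetric, and its strict part $\prec$ is irreflexive and asymmetric. I would also use that $P_i\colon C_\text{conflict} \to C_i$ is a functor, so it carries every state transition in $\mathcal{M}$ to a morphism of $C_i$ and respects composition and identities.

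The argument is then by contradiction. Assume that, besides the hypothesised $f\colon s_1 \to s_2$ with $P_i(s_1) \prec P_i(s_2)$, there is a state transition $g\colon s_2 \to s_1$ of the stated form. Applying $P_i$ to $f$ produces a morphism $P_i(s_1) \to P_i(s_2)$ in $C_i$, whence $P_i(s_1) \precsim P_i(s_2)$; applying $P_i$ to $g$ produces a morphism $P_i(s_2) \to P_i(s_1)$, whence $P_i(s_2) \precsim P_i(s_1)$. Antisymmetry of $\precsim$ forces $P_i(s_1) = P_i(s_2)$, contradicting $P_i(s_1) \prec P_i(s_2)$ since $\prec$ is irreflexive. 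Hence no such $g$ can exist. Equivalently, at the level of the preference relation alone: the hypothesis says DM $i$ strictly prefers $s_2$ to $s_1$, the morphism $g$ would yield $P_i(s_2) \precsim P_i(s_1)$, and $P_i(s_1) \prec P_i(s_2)$ together with $P_i(s_2) \precsim P_i(s_1)$ violates the asymmetry of the strict part of $\succsim_i$ (which is reflexive, complete and transitive, so its strict part is indeed asymmetric).

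The step I expect to need the most care is not deep but conventional: one must keep the directions of $\prec$ and $\succ$ straight and remember that $P_i$, being merely a functor into a poset, records only $\precsim$-comparisons and not strictness, so the strictness needed to close the contradiction has to be supplied by the hypothesis together with irreflexivity/antisymmetry rather than by $P_i$ itself. Once these conventions are fixed, the proof is essentially a one-line invocation of asymmetry, and it moreover yields the slightly stronger fact that no reverse transition $s_2 \to s_1$ whatsoever is consistent with a strict preference gain from $s_1$ to $s_2$.
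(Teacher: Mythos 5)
Your proof is correct, and your closing ``equivalently'' paragraph is essentially the paper's own argument: the paper simply asserts that a reverse improving transition ``would contradict the preference ordering,'' i.e., it invokes asymmetry of the strict preference relation, exactly as you do there. Your primary argument, however, takes a genuinely different and stronger route. The paper never applies the functor to $g$; it only plays the two strictness conditions against each other. You instead read $C_i$ as a thin category (poset), so that the mere existence of the morphism $g\colon s_2 \to s_1$ already yields $P_i(s_2) \precsim P_i(s_1)$, and you close with antisymmetry of $\precsim$ plus irreflexivity of $\prec$. This buys the stronger conclusion you note --- no reverse transition of any kind can coexist with the strict gain from $s_1$ to $s_2$ --- but it also exposes how strong the paper's clause ``$P_i(f) = P_i(s_1) \to P_i(s_2)$'' is under that reading: it makes every state transition weakly preference-improving for every DM, which is far more than standard GMCR intends (DMs must be able to move to less-preferred states for sanctioning concepts such as GMR to be meaningful), so you should flag that your main argument leans on this reading. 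One further point that both you and the paper gloss over: as literally written, the condition on $g$, namely $P_i(s_2) \succ P_i(s_1)$, denotes the same relation as the hypothesis $P_i(s_1) \prec P_i(s_2)$, so there is nothing to contradict unless one reads it (as you both implicitly do) as saying the reverse move is an improvement for DM $i$, i.e., $P_i(s_1) \succ P_i(s_2)$. Your remark about keeping the directions of $\prec$ and $\succ$ straight is exactly the right instinct; making that rereading explicit is what actually completes the proof.
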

\begin{proof}This proposition follows from the definition of the C-GMCR Preference Functor. Since $P_i$ is a functor, it preserves the preference structure of the DM. If $P_i(s_1) \prec P_i(s_2)$, it means DM $i$ prefers $s_2$ over $s_1$. Consequently, there cannot be a state transition $g: s_2 \to s_1$ where $P_i(s_2) \succ P_i(s_1)$ for the same DM $i$, as it would contradict the preference ordering.
\end{proof}
\begin{prop} If a state $s^*$ is a C-GMCR equilibrium, then for all DMs, there is no state transition $f: s^* \to s$ such that $P_i(s) \succ P_i(s^*)$.
\end{prop}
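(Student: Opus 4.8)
The plan is to prove the statement essentially by unfolding the definition of a C-GMCR equilibrium and invoking the functoriality of the preference functors $P_i$. First I would recall that, by the definition of \emph{C-GMCR Equilibrium}, a state $s^*$ being an equilibrium means precisely that no DM can unilaterally improve its preference level through a state transition (morphism) out of $s^*$. So it suffices to show that the existence of a morphism $f\colon s^* \to s$ with $P_i(s) \succ P_i(s^*)$ for some DM $i$ is exactly what ``DM $i$ can unilaterally improve its preference level'' means in the categorical language; the conclusion is then immediate by contraposition.

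Second I would make that translation precise. A morphism $f\colon s^* \to s$ in $C_\text{conflict}$ is, by the C-State Transition definition, a one-step move available from $s^*$ to $s$; applying the preference functor $P_i$ sends it to a morphism $P_i(f)\colon P_i(s^*) \to P_i(s)$ in the ordered set $C_i$, where by the characterizing property of $P_i$ we have $P_i(s^*) \precsim P_i(s)$ iff DM $i$ weakly prefers $s$ to $s^*$, and hence $P_i(s^*) \prec P_i(s)$, i.e. $P_i(s) \succ P_i(s^*)$, exactly captures a strict improvement. Thus a morphism $f$ with $P_i(s) \succ P_i(s^*)$ is precisely a unilateral improvement for DM $i$ from $s^*$, matching the C-GMCR-Nash formulation. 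One may also cite Proposition~1 here to record that the return move $g\colon s \to s^*$ cannot itself be an improvement, reinforcing the asymmetry, although this is not logically required for the conclusion.

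Third, I would close by contraposition: if such an $f$ existed for some $i$, DM $i$ could unilaterally improve at $s^*$, contradicting the hypothesis that $s^*$ is a C-GMCR equilibrium; hence no such $f$ exists for any DM, which is exactly the asserted condition.

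The main obstacle is definitional rather than computational. The notion of C-GMCR equilibrium is stated only informally (``no DM can unilaterally improve its preference level through state transitions'') and is even phrased so as to admit a collection of states, whereas the target is the sharp single-state inequality of C-GMCR-Nash. The delicate point is therefore to justify that the informal equilibrium notion, when specialized to one state, coincides with the C-GMCR-Nash definition and not with one of the weaker sequential notions (GMR, SMR, SEQ). I would resolve this by emphasizing that ``unilateral'' refers to a \emph{single} morphism out of $s^*$ controlled by \emph{one} DM, so that no counter-responses by the other DMs can enter the comparison; this pins the condition down to the Nash form. Once that identification is granted, the remainder of the proof is an immediate consequence of the functoriality of $P_i$ and the definition of $\succ$ on the preference order $C_i$.
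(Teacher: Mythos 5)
Your proposal is correct and takes essentially the same route as the paper: the paper's own proof is a direct unfolding of the (informal) definition of C-GMCR equilibrium, concluding immediately that no improving morphism $f\colon s^* \to s$ can exist. Your additional care in identifying ``unilateral improvement'' with the existence of a single morphism satisfying $P_i(s) \succ P_i(s^*)$, and in flagging that the equilibrium notion must be pinned to the Nash form rather than GMR/SMR/SEQ, is a useful clarification the paper omits, but it does not change the argument.
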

\begin{proof}
This proposition is a direct consequence of the definition of C-GMCR Equilibrium. A state $s^*$ is a C-GMCR equilibrium if no DM can unilaterally improve its preference level through state transitions. Therefore, there cannot exist a state transition $f: s^* \to s$ with $P_i(s) \succ P_i(s^*)$ for any DM $i$.\end{proof}
\begin{prop}
In a C-GMCR conflict, if there exists a state $s^*$ such that $P_i(s^*)$ is the maximum preference level for all DMs, then $s^*$ is a C-GMCR equilibrium.
\end{prop}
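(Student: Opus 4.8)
The plan is to argue directly from the definition of a C-GMCR equilibrium together with the order-preserving property of the preference functors $P_i$. First I would unpack the hypothesis: to say that $P_i(s^*)$ is the maximum preference level for DM $i$ means that $P_i(s^*)$ is a top element of the ordered set $C_i$, so that $P_i(s) \precsim P_i(s^*)$ for every state $s \in S$. In particular there is no state $s$ with $P_i(s^*) \prec P_i(s)$, and hence, by the correspondence between $\prec$ in $C_i$ and strict preference built into the definition of the preference functor, there is no state $s$ with $P_i(s) \succ P_i(s^*)$.

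Next I would translate this into the language of state transitions (morphisms). For any morphism $f\colon s^* \to s$ in $C_\text{conflict}$, the functor sends it to $P_i(f)\colon P_i(s^*) \to P_i(s)$, and by the previous paragraph the target $P_i(s)$ can never strictly exceed $P_i(s^*)$. Therefore, for every DM $i$, there is no state transition $f\colon s^* \to s$ with $P_i(s) \succ P_i(s^*)$; that is, no DM possesses a unilateral improvement from $s^*$. This is exactly the condition appearing in the definition of a C-GMCR equilibrium --- no DM can unilaterally improve its preference level through state transitions --- so $s^*$ is a C-GMCR equilibrium. As a remark, one can note that $s^*$ is in fact C-GMCR-Nash, from which the C-GMCR-GMR, C-GMCR-SMR, and C-GMCR-SEQ conditions follow vacuously, since each of those quantifies over sequences initiated by a unilateral improvement from $s^*$, and no such improvement exists.

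I do not expect a substantive obstacle here; the statement is essentially the observation that a globally optimal state is trivially stable. The only point requiring care is bookkeeping: one must be explicit that \emph{maximum preference level} is meant for each DM's ordered set $C_i$ separately, with the single state $s^*$ being simultaneously a top element for all of them, and one should argue straight from the definition of equilibrium rather than routing through Proposition~2 (which states the opposite implication) so as to avoid any appearance of circularity.
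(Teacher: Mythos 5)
Your proof is correct and follows essentially the same route as the paper's: take an arbitrary transition $f\colon s^* \to s$, note that maximality of $P_i(s^*)$ forces $P_i(s) \precsim P_i(s^*)$, and conclude from the definition of C-GMCR equilibrium that no DM can improve. Your added care about top elements in each $C_i$ and the remark on the derived stability notions go slightly beyond the paper's terse argument but do not change the approach.
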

\begin{proof}  
Assume $s^*$ is a state with the maximum preference level for all DMs. Suppose there exists a state transition $f:s^* \to s$ for some DM $i$. Since $P_i(s^*)$ is the maximum preference level, it must be the case that $P_i(s) \precsim P_i(s^*)$. By the definition of a C-GMCR equilibrium, if there is no state transition from $s^*$ that improves the preference level for any DM, then $s^*$ is a C-GMCR equilibrium. Hence, $s^*$ is a C-GMCR equilibrium.
\end{proof}

These propositions provide insights into the properties and behaviors of conflicts modeled using the C-GMCR framework, laying the foundation for further analysis and potential conflict resolution strategies.

\section{Application and Analysis}
\label{sec_III}
The present section utilizes the C-GMCR framework to explicate conventional conflicts.
\subsection{Prisoner's Dilemma}
Here is a textual representation of the diagram of the Prisoner's Dilemma in the style of C-GMCR:
\begin{equation*}
\label{PD_1}
\xymatrix{
s_1(C,C)\ar[r]|{f_1}\ar[d]|{f_2}\ar@{}[rd]|{}&s_2(C,D) \ar[d]|{f_4}\\
s_3(D,C)\ar[r]|{f_3}&s_4(D,D)
}
\end{equation*}
This diagram shows the four states (objects) in the C-GMCR conflict representing the decision variable configurations:
\begin{itemize}
\item $s_1$: A cooperates, B cooperates (C, C)
\item $s_2$: A cooperates, B betrays (C, D)
\item $s_3$: A betrays, B cooperates (D, C)
\item $s_4$: A betrays, B betrays (D, D)
\end{itemize}
The diagram also shows the non-identity state transitions (morphisms) between the states:
\begin{itemize}
    \item $f_1:s_1 \to s_2$ (A keeps cooperating, B changes to betraying)
  \item $f_2:s_1 \to s_3$ (A changes to betraying, B keeps cooperating)
  \item $f_3:s_2 \to s_4$ (A changes to betraying, B keeps betraying)
   \item $f_4:s_3 \to s_4$ (A keeps betraying, B changes to betraying)
\end{itemize}

\subsection{Elmira Conflict}
The Elmira conflict, which transpired in Canada, provides an instance of an environmental dispute that showcases an analytical resolution approach utilizing the GMCR framework \cite{b11}. To delineate the Elmira conflict using the C-GMCR, the first step is to establish the objects in the C-State category, which are the feasible states. The morphisms in C-Transition embody the state transitions controlled by each DM. For instance, DM $\textbf{M}$(Ministry of Environment) has the authority to modify the control order, DM $\textbf{U}$(Uniroyal)has the discretion to either delay, accept, or abandon the order, while DM $\textbf{L}$(local government) can insist on strict enforcement.

The objects in C-Preference represent the preferences of each DM:
M: $s_7 \succ s_3 \succ s_4 \succ s_8 \succ s_5 \succ s_1 \succ s_2 \succ s_6 \succ s_9$;
U: $s_1 \succ s_4 \succ s_8 \succ s_5 \succ s_9 \succ s_3 \succ s_7 \succ s_2 \succ s_6$;
L: $s_7 \succ s_3 \succ s_5 \succ s_1 \succ s_8 \succ s_6 \succ s_4 \succ s_2 \succ s_9$.

We represent the Elmira conflict using a C-GMCR diagram. In this diagram, nodes represent the objects of C-State (states), while arrows represent the morphisms of C-Transition (state transitions) and C-Preference (preferences). The arrow colors and labels indicate which DM is responsible for the transition or preference.
The diagram shows the states and transitions controlled by each DM, with $M$ in blue, $U$ (Uniroyal) in red, and $L$  in green.\\

\begin{figure}[H]
\begin{center}
\tikzset{every picture/.style={line width=0.75pt}} 

\begin{tikzpicture}[x=0.75pt,y=0.75pt,yscale=-1,xscale=1]

\draw (69,26) node [anchor=north west][inner sep=0.75pt]   [align=left] {s1};
\draw (132,26) node [anchor=north west][inner sep=0.75pt]   [align=left] {s2};
\draw (247,26.33) node [anchor=north west][inner sep=0.75pt]   [align=left] {s4};
\draw (189.33,26) node [anchor=north west][inner sep=0.75pt]   [align=left] {s3};
\draw (69,80.67) node [anchor=north west][inner sep=0.75pt]   [align=left] {s5};
\draw (132,80) node [anchor=north west][inner sep=0.75pt]   [align=left] {s6};
\draw (246.33,79) node [anchor=north west][inner sep=0.75pt]   [align=left] {s8};
\draw (190,78.67) node [anchor=north west][inner sep=0.75pt]   [align=left] {s7};
\draw (162,127.67) node [anchor=north west][inner sep=0.75pt]   [align=left] {s9};
\draw [color={rgb, 255:red, 74; green, 144; blue, 226 }  ,draw opacity=1 ]   (89,34) -- (127,34) ;
\draw [shift={(129,34)}, rotate = 180] [fill={rgb, 255:red, 74; green, 144; blue, 226 }  ,fill opacity=1 ][line width=0.08]  [draw opacity=0] (12,-3) -- (0,0) -- (12,3) -- cycle    ;
\draw [color={rgb, 255:red, 74; green, 144; blue, 226 }  ,draw opacity=1 ]   (89,88.54) -- (127,88.14) ;
\draw [shift={(129,88.12)}, rotate = 179.39] [fill={rgb, 255:red, 74; green, 144; blue, 226 }  ,fill opacity=1 ][line width=0.08]  [draw opacity=0] (12,-3) -- (0,0) -- (12,3) -- cycle    ;
\draw [color={rgb, 255:red, 74; green, 144; blue, 226 }  ,draw opacity=1 ]   (210,86.73) -- (241.33,86.92) ;
\draw [shift={(243.33,86.93)}, rotate = 180.34] [fill={rgb, 255:red, 74; green, 144; blue, 226 }  ,fill opacity=1 ][line width=0.08]  [draw opacity=0] (12,-3) -- (0,0) -- (12,3) -- cycle    ;
\draw [color={rgb, 255:red, 208; green, 2; blue, 27 }  ,draw opacity=1 ]   (89,29.6) .. controls (121.4,15.63) and (153.36,15.39) .. (184.89,28.9) ;
\draw [shift={(186.33,29.52)}, rotate = 203.83] [fill={rgb, 255:red, 208; green, 2; blue, 27 }  ,fill opacity=1 ][line width=0.08]  [draw opacity=0] (12,-3) -- (0,0) -- (12,3) -- cycle    ;
\draw [color={rgb, 255:red, 74; green, 144; blue, 226 }  ,draw opacity=1 ]   (209.33,34.07) -- (242,34.26) ;
\draw [shift={(244,34.27)}, rotate = 180.33] [fill={rgb, 255:red, 74; green, 144; blue, 226 }  ,fill opacity=1 ][line width=0.08]  [draw opacity=0] (12,-3) -- (0,0) -- (12,3) -- cycle    ;
\draw [color={rgb, 255:red, 208; green, 2; blue, 27 }  ,draw opacity=1 ]   (89,38.4) .. controls (141.69,57.05) and (168.53,84.91) .. (169.53,121.97) ;
\draw [shift={(169.55,123.67)}, rotate = 269.68] [fill={rgb, 255:red, 208; green, 2; blue, 27 }  ,fill opacity=1 ][line width=0.08]  [draw opacity=0] (12,-3) -- (0,0) -- (12,3) -- cycle    ;
\draw [color={rgb, 255:red, 208; green, 2; blue, 27 }  ,draw opacity=1 ]   (152,29.35) .. controls (180.67,15.98) and (210.86,15.97) .. (242.55,29.3) ;
\draw [shift={(244,29.91)}, rotate = 203.4] [fill={rgb, 255:red, 208; green, 2; blue, 27 }  ,fill opacity=1 ][line width=0.08]  [draw opacity=0] (12,-3) -- (0,0) -- (12,3) -- cycle    ;
\draw [color={rgb, 255:red, 208; green, 2; blue, 27 }  ,draw opacity=1 ]   (148.9,46) .. controls (163.8,63.97) and (170.98,89.36) .. (170.44,122.16) ;
\draw [shift={(170.41,123.67)}, rotate = 271.36] [fill={rgb, 255:red, 208; green, 2; blue, 27 }  ,fill opacity=1 ][line width=0.08]  [draw opacity=0] (12,-3) -- (0,0) -- (12,3) -- cycle    ;
\draw [color={rgb, 255:red, 208; green, 2; blue, 27 }  ,draw opacity=1 ]   (193.21,46) .. controls (183.24,70.51) and (176.38,95.86) .. (172.61,122.06) ;
\draw [shift={(172.39,123.67)}, rotate = 277.9] [fill={rgb, 255:red, 208; green, 2; blue, 27 }  ,fill opacity=1 ][line width=0.08]  [draw opacity=0] (12,-3) -- (0,0) -- (12,3) -- cycle    ;
\draw [color={rgb, 255:red, 208; green, 2; blue, 27 }  ,draw opacity=1 ]   (244,37.32) .. controls (206.92,44.42) and (183.42,72.7) .. (173.49,122.16) ;
\draw [shift={(173.2,123.67)}, rotate = 280.95] [fill={rgb, 255:red, 208; green, 2; blue, 27 }  ,fill opacity=1 ][line width=0.08]  [draw opacity=0] (12,-3) -- (0,0) -- (12,3) -- cycle    ;
\draw [color={rgb, 255:red, 208; green, 2; blue, 27 }  ,draw opacity=1 ]   (89,84.15) .. controls (120.9,70.04) and (153.08,69.33) .. (185.52,81.99) ;
\draw [shift={(187,82.58)}, rotate = 201.94] [fill={rgb, 255:red, 208; green, 2; blue, 27 }  ,fill opacity=1 ][line width=0.08]  [draw opacity=0] (12,-3) -- (0,0) -- (12,3) -- cycle    ;
\draw [color={rgb, 255:red, 208; green, 2; blue, 27 }  ,draw opacity=1 ]   (89,94.48) -- (157.22,128.95) ;
\draw [shift={(159,129.85)}, rotate = 206.81] [fill={rgb, 255:red, 208; green, 2; blue, 27 }  ,fill opacity=1 ][line width=0.08]  [draw opacity=0] (12,-3) -- (0,0) -- (12,3) -- cycle    ;
\draw [color={rgb, 255:red, 208; green, 2; blue, 27 }  ,draw opacity=1 ]   (152,83.47) .. controls (184.45,69.19) and (214.34,68.48) .. (241.66,81.35) ;
\draw [shift={(243.33,82.15)}, rotate = 206.24] [fill={rgb, 255:red, 208; green, 2; blue, 27 }  ,fill opacity=1 ][line width=0.08]  [draw opacity=0] (12,-3) -- (0,0) -- (12,3) -- cycle    ;
\draw [color={rgb, 255:red, 208; green, 2; blue, 27 }  ,draw opacity=1 ]   (148.05,100) -- (161.88,121.97) ;
\draw [shift={(162.95,123.67)}, rotate = 237.81] [fill={rgb, 255:red, 208; green, 2; blue, 27 }  ,fill opacity=1 ][line width=0.08]  [draw opacity=0] (12,-3) -- (0,0) -- (12,3) -- cycle    ;
\draw [color={rgb, 255:red, 208; green, 2; blue, 27 }  ,draw opacity=1 ]   (191.64,98.67) -- (178.35,121.93) ;
\draw [shift={(177.36,123.67)}, rotate = 299.74] [fill={rgb, 255:red, 208; green, 2; blue, 27 }  ,fill opacity=1 ][line width=0.08]  [draw opacity=0] (12,-3) -- (0,0) -- (12,3) -- cycle    ;
\draw [color={rgb, 255:red, 208; green, 2; blue, 27 }  ,draw opacity=1 ]   (243.33,93.64) -- (183.73,128.03) ;
\draw [shift={(182,129.03)}, rotate = 330.01] [fill={rgb, 255:red, 208; green, 2; blue, 27 }  ,fill opacity=1 ][line width=0.08]  [draw opacity=0] (12,-3) -- (0,0) -- (12,3) -- cycle    ;
\draw [color={rgb, 255:red, 26; green, 154; blue, 125 }  ,draw opacity=1 ]   (77.5,48) -- (77.5,74.67) ;
\draw [shift={(77.5,76.67)}, rotate = 270] [fill={rgb, 255:red, 26; green, 154; blue, 125 }  ,fill opacity=1 ][line width=0.08]  [draw opacity=0] (12,-3) -- (0,0) -- (12,3) -- cycle    ;
\draw [shift={(77.5,46)}, rotate = 90] [fill={rgb, 255:red, 26; green, 154; blue, 125 }  ,fill opacity=1 ][line width=0.08]  [draw opacity=0] (12,-3) -- (0,0) -- (12,3) -- cycle    ;
\draw [color={rgb, 255:red, 26; green, 154; blue, 125 }  ,draw opacity=1 ]   (140.5,48) -- (140.5,74) ;
\draw [shift={(140.5,76)}, rotate = 270] [fill={rgb, 255:red, 26; green, 154; blue, 125 }  ,fill opacity=1 ][line width=0.08]  [draw opacity=0] (12,-3) -- (0,0) -- (12,3) -- cycle    ;
\draw [shift={(140.5,46)}, rotate = 90] [fill={rgb, 255:red, 26; green, 154; blue, 125 }  ,fill opacity=1 ][line width=0.08]  [draw opacity=0] (12,-3) -- (0,0) -- (12,3) -- cycle    ;
\draw [color={rgb, 255:red, 26; green, 154; blue, 125 }  ,draw opacity=1 ]   (198.01,48) -- (198.32,72.67) ;
\draw [shift={(198.35,74.67)}, rotate = 269.27] [fill={rgb, 255:red, 26; green, 154; blue, 125 }  ,fill opacity=1 ][line width=0.08]  [draw opacity=0] (12,-3) -- (0,0) -- (12,3) -- cycle    ;
\draw [shift={(197.99,46)}, rotate = 89.27] [fill={rgb, 255:red, 26; green, 154; blue, 125 }  ,fill opacity=1 ][line width=0.08]  [draw opacity=0] (12,-3) -- (0,0) -- (12,3) -- cycle    ;
\draw [color={rgb, 255:red, 26; green, 154; blue, 125 }  ,draw opacity=1 ]   (255.32,48.33) -- (255.01,73) ;
\draw [shift={(254.99,75)}, rotate = 270.73] [fill={rgb, 255:red, 26; green, 154; blue, 125 }  ,fill opacity=1 ][line width=0.08]  [draw opacity=0] (12,-3) -- (0,0) -- (12,3) -- cycle    ;
\draw [shift={(255.35,46.33)}, rotate = 90.73] [fill={rgb, 255:red, 26; green, 154; blue, 125 }  ,fill opacity=1 ][line width=0.08]  [draw opacity=0] (12,-3) -- (0,0) -- (12,3) -- cycle    ;

\end{tikzpicture}
\caption{Elmira Conflict : C-GMCR}
\end{center}
\end{figure}
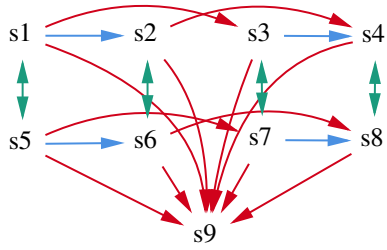
The application of category theory to the Elmira conflict allows us to explore higher-order strategies by introducing the concept of morphisms and composition of state transitions. 
In traditional GMCR, the analysis focuses on the immediate consequences of each DM's actions. However, in C-GMCR, we can examine not only the immediate transitions but also sequences of actions and their compositions. This is achieved by employing morphisms, which represent relationships between state transitions, and the composition of morphisms, which allows for the chaining of transitions.
Let's consider an example in the Elmira conflict. Suppose we have a sequence of actions (morphisms) that leads to the state $s_i$ for $\textbf{M}$, followed by a sequence of actions for $\textbf{U}$, which transitions to state $s_j$. By examining the composition of these morphisms, we can gain insights into how the combined actions of $\textbf{M}$ and $\textbf{U}$ might result in different strategic outcomes.

The chaining of state transitions can be represented using the composition of morphisms:
$f : s_i \rightarrow s_j$,
$g : s_j \rightarrow s_k$,
$h = g \circ f : s_i \rightarrow s_k$.

In this example, $h$ represents the overall effect of the sequence of actions taken by $\textbf{M}$ and $\textbf{U}$. By analyzing these higher-order strategies, we can better understand the strategic interdependencies and the long-term effects of different actions on the Elmira conflict.

The use of category theory in C-GMCR allows for the exploration of these complex sequences of actions and their compositions, thereby revealing higher-order strategies and interactions that might not be apparent in the traditional GMCR analysis.

\subsection{International Trade Conflict}
\subsubsection{Base case}
Let us contemplate a simplified instance of an international trade conflict that involves three nations: Country A ($\textbf{A}$), Country B ($\textbf{B}$), and Country C ($\textbf{C}$). The conflict concerns the negotiation of trade tariffs and quotas on a specific product. DMs in this conflict are the governments of the aforementioned nations, each of which possesses distinct preferences based on their economic interests. Herein, we provide a succinct depiction of the position of each nation:
$\textbf{A}$: Interested in free trade with low tariffs to increase exports,
$\textbf{B}$: Seeking higher tariffs to protect domestic industries,
$\textbf{C}$: Prefers moderate tariffs to balance between domestic industry protection and export promotion.
We can represent the possible actions for each country as follows:
$\textbf{A}$: {Low Tariffs (L), Moderate Tariffs (M)}, $\textbf{B}$: {High Tariffs (H), Moderate Tariffs (M), and $\textbf{C}$: {Moderate Tariffs (M), Balanced Approach (B)}.

Let's assume there are six feasible states (S) that represent the different combinations of actions by the countries:
$S = {s_1, s_2, s_3, s_4, s_5, s_6}$, where
$s_1$: (L, H, M),
$s_2$: (L, H, B),
$s_3$: (L, M, M),
$s_4$: (L, M, B),
$s_5$: (M, H, M),
$s_6$: (M, H, B).
By employing a GMCR-based methodology, it is feasible to articulate the preferences of each country in the following manner:
$\textbf{A}$: $s_3 \succ s_4 \succ s_1 \succ s_2 \succ s_5 \succ s_6$,
$\textbf{B}$: $s_5 \succ s_6 \succ s_1 \succ s_2 \succ s_3 \succ s_4$,
$\textbf{C}$: $s_4 \succ s_6 \succ s_1 \succ s_3 \succ s_2 \succ s_5$.

The diagram depicts the states as ellipses, and the transitions between states as arrows with the corresponding DMs' actions ($\textbf{A}$, $\textbf{B}$, $\textbf{C}$) indicated in green, red, and blue, respectively.

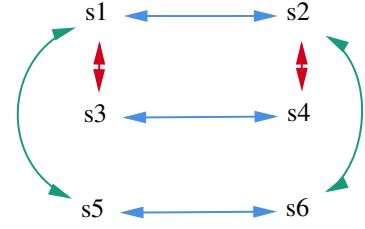
\begin{figure}[t]
\begin{center}

\tikzset{every picture/.style={line width=0.75pt}} 

\begin{tikzpicture}[x=0.75pt,y=0.75pt,yscale=-1,xscale=1]

\draw (129.67,65.33) node [anchor=north west][inner sep=0.75pt]   [align=left] {s1};
\draw (231.33,65.33) node [anchor=north west][inner sep=0.75pt]   [align=left] {s2};
\draw (129,116.67) node [anchor=north west][inner sep=0.75pt]   [align=left] {s3};
\draw (231.67,116) node [anchor=north west][inner sep=0.75pt]   [align=left] {s4};
\draw (127.67,164.67) node [anchor=north west][inner sep=0.75pt]   [align=left] {s5};
\draw (231,164) node [anchor=north west][inner sep=0.75pt]   [align=left] {s6};
\draw [color={rgb, 255:red, 74; green, 144; blue, 226 }  ,draw opacity=1 ]   (152.67,73.83) -- (226.33,73.83) ;
\draw [shift={(228.33,73.83)}, rotate = 180] [fill={rgb, 255:red, 74; green, 144; blue, 226 }  ,fill opacity=1 ][line width=0.08]  [draw opacity=0] (12,-3) -- (0,0) -- (12,3) -- cycle    ;
\draw [shift={(150.67,73.83)}, rotate = 0] [fill={rgb, 255:red, 74; green, 144; blue, 226 }  ,fill opacity=1 ][line width=0.08]  [draw opacity=0] (12,-3) -- (0,0) -- (12,3) -- cycle    ;
\draw [color={rgb, 255:red, 74; green, 144; blue, 226 }  ,draw opacity=1 ]   (150.67,173.08) -- (226,172.59) ;
\draw [shift={(228,172.58)}, rotate = 179.63] [fill={rgb, 255:red, 74; green, 144; blue, 226 }  ,fill opacity=1 ][line width=0.08]  [draw opacity=0] (12,-3) -- (0,0) -- (12,3) -- cycle    ;
\draw [shift={(148.67,173.09)}, rotate = 359.63] [fill={rgb, 255:red, 74; green, 144; blue, 226 }  ,fill opacity=1 ][line width=0.08]  [draw opacity=0] (12,-3) -- (0,0) -- (12,3) -- cycle    ;
\draw [color={rgb, 255:red, 208; green, 2; blue, 27 }  ,draw opacity=1 ]   (138.48,88.33) -- (138.19,110.67) ;
\draw [shift={(138.16,112.67)}, rotate = 270.74] [fill={rgb, 255:red, 208; green, 2; blue, 27 }  ,fill opacity=1 ][line width=0.08]  [draw opacity=0] (12,-3) -- (0,0) -- (12,3) -- cycle    ;
\draw [shift={(138.5,86.33)}, rotate = 90.74] [fill={rgb, 255:red, 208; green, 2; blue, 27 }  ,fill opacity=1 ][line width=0.08]  [draw opacity=0] (12,-3) -- (0,0) -- (12,3) -- cycle    ;
\draw [color={rgb, 255:red, 26; green, 154; blue, 125 }  ,draw opacity=1 ]   (124.59,80.33) .. controls (91.71,93.28) and (85.33,146.52) .. (122.92,165.28) ;
\draw [shift={(124.67,166.11)}, rotate = 204.44] [fill={rgb, 255:red, 26; green, 154; blue, 125 }  ,fill opacity=1 ][line width=0.08]  [draw opacity=0] (12,-3) -- (0,0) -- (12,3) -- cycle    ;
\draw [shift={(126.67,79.58)}, rotate = 161.68] [fill={rgb, 255:red, 26; green, 154; blue, 125 }  ,fill opacity=1 ][line width=0.08]  [draw opacity=0] (12,-3) -- (0,0) -- (12,3) -- cycle    ;
\draw [color={rgb, 255:red, 26; green, 154; blue, 125 }  ,draw opacity=1 ]   (254.41,84.98) .. controls (279.94,102.18) and (272.86,150.74) .. (253.51,162.06) ;
\draw [shift={(252,162.86)}, rotate = 334.86] [fill={rgb, 255:red, 26; green, 154; blue, 125 }  ,fill opacity=1 ][line width=0.08]  [draw opacity=0] (12,-3) -- (0,0) -- (12,3) -- cycle    ;
\draw [shift={(252.33,83.69)}, rotate = 29.64] [fill={rgb, 255:red, 26; green, 154; blue, 125 }  ,fill opacity=1 ][line width=0.08]  [draw opacity=0] (12,-3) -- (0,0) -- (12,3) -- cycle    ;
\draw [color={rgb, 255:red, 208; green, 2; blue, 27 }  ,draw opacity=1 ]   (240.43,88.33) -- (240.57,110) ;
\draw [shift={(240.58,112)}, rotate = 269.62] [fill={rgb, 255:red, 208; green, 2; blue, 27 }  ,fill opacity=1 ][line width=0.08]  [draw opacity=0] (12,-3) -- (0,0) -- (12,3) -- cycle    ;
\draw [shift={(240.42,86.33)}, rotate = 89.62] [fill={rgb, 255:red, 208; green, 2; blue, 27 }  ,fill opacity=1 ][line width=0.08]  [draw opacity=0] (12,-3) -- (0,0) -- (12,3) -- cycle    ;
\draw [color={rgb, 255:red, 74; green, 144; blue, 226 }  ,draw opacity=1 ]   (152,125.08) -- (226.67,124.59) ;
\draw [shift={(228.67,124.58)}, rotate = 179.63] [fill={rgb, 255:red, 74; green, 144; blue, 226 }  ,fill opacity=1 ][line width=0.08]  [draw opacity=0] (12,-3) -- (0,0) -- (12,3) -- cycle    ;
\draw [shift={(150,125.09)}, rotate = 359.63] [fill={rgb, 255:red, 74; green, 144; blue, 226 }  ,fill opacity=1 ][line width=0.08]  [draw opacity=0] (12,-3) -- (0,0) -- (12,3) -- cycle    ;

\end{tikzpicture}

\caption{International Traded Conflict-Base Case : C-GMCR}
\end{center}
\end{figure}

\subsubsection{Intricate case}
Let us examine a more intricate relationship that arises from complex interdependencies among DMs and state transitions. Suppose there exist trade agreements or regulations that constrain the actions of DMs based on the actions taken by other DMs.
DMs are $\textbf{A}$ (Country A), $\textbf{B}$ (Country B), and $\textbf{R}$ (Regulatory Body).
Assume the following states for this conflict:
$s_1$: No tariffs imposed
$s_2$: Country A imposes tariffs on Country B
$s_3$: Country B imposes tariffs on Country A
$s_4$: Both countries impose tariffs on each other
$s_5$: Regulatory Body imposes new trade rules
The state transitions could be as follows:
$\textbf{A}$: ${(1, 2), (3, 4)}$
$\textbf{B}$: ${(1, 3), (2, 4)}$
$\textbf{R}$: ${(1, 5), (2, 5), (3, 5), (4, 5)}$
The additional morphisms representing the regulations could be the following:

If the Regulatory Body imposes new trade rules (state 5), Country A and Country B must lift their tariffs: ${(5, 1)}$
If both countries impose tariffs on each other (state 4), the Regulatory Body must intervene and impose new trade rules (state 5): ${(4, 5)}$.
\begin{figure}[b]
\begin{center}

\tikzset{every picture/.style={line width=0.75pt}} 

\begin{tikzpicture}[x=0.75pt,y=0.75pt,yscale=-1,xscale=1]

\draw (107.33,46.33) node [anchor=north west][inner sep=0.75pt]   [align=left] {s1};
\draw (190.33,46.67) node [anchor=north west][inner sep=0.75pt]   [align=left] {s2};
\draw (106.67,97.67) node [anchor=north west][inner sep=0.75pt]   [align=left] {s3};
\draw (191,98) node [anchor=north west][inner sep=0.75pt]   [align=left] {s4};
\draw (107,146) node [anchor=north west][inner sep=0.75pt]   [align=left] {s5};
\draw [color={rgb, 255:red, 26; green, 154; blue, 125 }  ,draw opacity=1 ]   (128.33,54.88) -- (185.33,55.11) ;
\draw [shift={(187.33,55.12)}, rotate = 180.23] [fill={rgb, 255:red, 26; green, 154; blue, 125 }  ,fill opacity=1 ][line width=0.08]  [draw opacity=0] (12,-3) -- (0,0) -- (12,3) -- cycle    ;
\draw [color={rgb, 255:red, 208; green, 2; blue, 27 }  ,draw opacity=1 ]   (116.17,67.33) -- (115.85,91.67) ;
\draw [shift={(115.83,93.67)}, rotate = 270.74] [fill={rgb, 255:red, 208; green, 2; blue, 27 }  ,fill opacity=1 ][line width=0.08]  [draw opacity=0] (12,-3) -- (0,0) -- (12,3) -- cycle    ;
\draw [color={rgb, 255:red, 74; green, 144; blue, 226 }  ,draw opacity=1 ]   (107.16,67.33) .. controls (88.12,88.02) and (88.23,112.47) .. (107.49,140.71) ;
\draw [shift={(108.38,142)}, rotate = 235.01] [fill={rgb, 255:red, 74; green, 144; blue, 226 }  ,fill opacity=1 ][line width=0.08]  [draw opacity=0] (12,-3) -- (0,0) -- (12,3) -- cycle    ;
\draw [color={rgb, 255:red, 74; green, 144; blue, 226 }  ,draw opacity=1 ]   (188.85,67.67) -- (127.77,140.47) ;
\draw [shift={(126.49,142)}, rotate = 309.99] [fill={rgb, 255:red, 74; green, 144; blue, 226 }  ,fill opacity=1 ][line width=0.08]  [draw opacity=0] (12,-3) -- (0,0) -- (12,3) -- cycle    ;
\draw [color={rgb, 255:red, 208; green, 2; blue, 27 }  ,draw opacity=1 ]   (199.5,67.67) -- (199.81,92) ;
\draw [shift={(199.84,94)}, rotate = 269.26] [fill={rgb, 255:red, 208; green, 2; blue, 27 }  ,fill opacity=1 ][line width=0.08]  [draw opacity=0] (12,-3) -- (0,0) -- (12,3) -- cycle    ;
\draw [color={rgb, 255:red, 74; green, 144; blue, 226 }  ,draw opacity=1 ]   (115.75,118.67) -- (115.9,140) ;
\draw [shift={(115.91,142)}, rotate = 269.6] [fill={rgb, 255:red, 74; green, 144; blue, 226 }  ,fill opacity=1 ][line width=0.08]  [draw opacity=0] (12,-3) -- (0,0) -- (12,3) -- cycle    ;
\draw [color={rgb, 255:red, 26; green, 154; blue, 125 }  ,draw opacity=1 ]   (127.67,106.21) -- (186,106.44) ;
\draw [shift={(188,106.45)}, rotate = 180.23] [fill={rgb, 255:red, 26; green, 154; blue, 125 }  ,fill opacity=1 ][line width=0.08]  [draw opacity=0] (12,-3) -- (0,0) -- (12,3) -- cycle    ;
\draw [color={rgb, 255:red, 74; green, 144; blue, 226 }  ,draw opacity=1 ]   (188,113.36) -- (129.74,146.65) ;
\draw [shift={(128,147.64)}, rotate = 330.26] [fill={rgb, 255:red, 74; green, 144; blue, 226 }  ,fill opacity=1 ][line width=0.08]  [draw opacity=0] (12,-3) -- (0,0) -- (12,3) -- cycle    ;

\end{tikzpicture}

\caption{International Traded Conflict-Intricate Case : C-GMCR}
\end{center}
\end{figure}
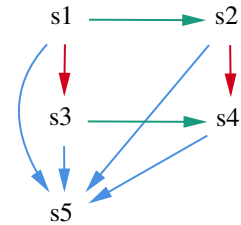

Regarding a framework that incorporates factors beyond the primary DM, we have previously demonstrated a concept whereby the DM's capacity to undertake new state transitions is affected by external factors \cite{b12}. Fig \ref{PD_2} illustrates the graph with multi arcs, which can be gained or lost due to external factors. The graphical representation depicts the behavioral responses of prisoners to rumors of a prospective pardon, with extra arcs in solid line indicating the influence of external factors beyond the prisoners' control. As a future research direction, we aim to extend this framework in a more generalized manner using C-GMCR.

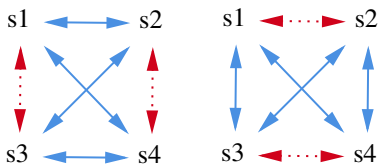
\begin{figure}[b]
      \centering

\tikzset{every picture/.style={line width=0.75pt}} 

\begin{tikzpicture}[x=0.75pt,y=0.75pt,yscale=-1,xscale=1]

\draw (61.33,46.33) node [anchor=north west][inner sep=0.75pt]   [align=left] {s1};
\draw (128,47) node [anchor=north west][inner sep=0.75pt]   [align=left] {s2};
\draw (60.67,114.33) node [anchor=north west][inner sep=0.75pt]   [align=left] {s3};
\draw (127.33,115) node [anchor=north west][inner sep=0.75pt]   [align=left] {s4};
\draw (170,45.67) node [anchor=north west][inner sep=0.75pt]   [align=left] {s1};
\draw (236.67,46.33) node [anchor=north west][inner sep=0.75pt]   [align=left] {s2};
\draw (169.33,113.67) node [anchor=north west][inner sep=0.75pt]   [align=left] {s3};
\draw (236,114.33) node [anchor=north west][inner sep=0.75pt]   [align=left] {s4};
\draw [color={rgb, 255:red, 74; green, 144; blue, 226 }  ,draw opacity=1 ][line width=0.75]    (83.33,54.47) -- (123,54.87) ;
\draw [shift={(125,54.89)}, rotate = 180.57] [fill={rgb, 255:red, 74; green, 144; blue, 226 }  ,fill opacity=1 ][line width=0.08]  [draw opacity=0] (12,-3) -- (0,0) -- (12,3) -- cycle    ;
\draw [shift={(81.33,54.45)}, rotate = 0.57] [fill={rgb, 255:red, 74; green, 144; blue, 226 }  ,fill opacity=1 ][line width=0.08]  [draw opacity=0] (12,-3) -- (0,0) -- (12,3) -- cycle    ;
\draw [color={rgb, 255:red, 74; green, 144; blue, 226 }  ,draw opacity=1 ]   (82.72,67.74) -- (122.95,109.59) ;
\draw [shift={(124.33,111.04)}, rotate = 226.13] [fill={rgb, 255:red, 74; green, 144; blue, 226 }  ,fill opacity=1 ][line width=0.08]  [draw opacity=0] (12,-3) -- (0,0) -- (12,3) -- cycle    ;
\draw [shift={(81.33,66.3)}, rotate = 46.13] [fill={rgb, 255:red, 74; green, 144; blue, 226 }  ,fill opacity=1 ][line width=0.08]  [draw opacity=0] (12,-3) -- (0,0) -- (12,3) -- cycle    ;
\draw [color={rgb, 255:red, 74; green, 144; blue, 226 }  ,draw opacity=1 ]   (82.67,122.47) -- (122.33,122.87) ;
\draw [shift={(124.33,122.89)}, rotate = 180.57] [fill={rgb, 255:red, 74; green, 144; blue, 226 }  ,fill opacity=1 ][line width=0.08]  [draw opacity=0] (12,-3) -- (0,0) -- (12,3) -- cycle    ;
\draw [shift={(80.67,122.45)}, rotate = 0.57] [fill={rgb, 255:red, 74; green, 144; blue, 226 }  ,fill opacity=1 ][line width=0.08]  [draw opacity=0] (12,-3) -- (0,0) -- (12,3) -- cycle    ;
\draw [color={rgb, 255:red, 208; green, 2; blue, 27 }  ,draw opacity=1 ] [dash pattern={on 0.84pt off 2.51pt}]  (69.7,68.33) -- (69.3,108.33) ;
\draw [shift={(69.28,110.33)}, rotate = 270.56] [fill={rgb, 255:red, 208; green, 2; blue, 27 }  ,fill opacity=1 ][line width=0.08]  [draw opacity=0] (12,-3) -- (0,0) -- (12,3) -- cycle    ;
\draw [shift={(69.72,66.33)}, rotate = 90.56] [fill={rgb, 255:red, 208; green, 2; blue, 27 }  ,fill opacity=1 ][line width=0.08]  [draw opacity=0] (12,-3) -- (0,0) -- (12,3) -- cycle    ;
\draw [color={rgb, 255:red, 208; green, 2; blue, 27 }  ,draw opacity=1 ] [dash pattern={on 0.84pt off 2.51pt}]  (136.36,69) -- (135.97,109) ;
\draw [shift={(135.95,111)}, rotate = 270.56] [fill={rgb, 255:red, 208; green, 2; blue, 27 }  ,fill opacity=1 ][line width=0.08]  [draw opacity=0] (12,-3) -- (0,0) -- (12,3) -- cycle    ;
\draw [shift={(136.38,67)}, rotate = 90.56] [fill={rgb, 255:red, 208; green, 2; blue, 27 }  ,fill opacity=1 ][line width=0.08]  [draw opacity=0] (12,-3) -- (0,0) -- (12,3) -- cycle    ;
\draw [color={rgb, 255:red, 74; green, 144; blue, 226 }  ,draw opacity=1 ]   (82.08,109.42) -- (123.59,67.91) ;
\draw [shift={(125,66.5)}, rotate = 135] [fill={rgb, 255:red, 74; green, 144; blue, 226 }  ,fill opacity=1 ][line width=0.08]  [draw opacity=0] (12,-3) -- (0,0) -- (12,3) -- cycle    ;
\draw [shift={(80.67,110.83)}, rotate = 315] [fill={rgb, 255:red, 74; green, 144; blue, 226 }  ,fill opacity=1 ][line width=0.08]  [draw opacity=0] (12,-3) -- (0,0) -- (12,3) -- cycle    ;
\draw [color={rgb, 255:red, 74; green, 144; blue, 226 }  ,draw opacity=1 ]   (245.03,68.33) -- (244.64,108.33) ;
\draw [shift={(244.62,110.33)}, rotate = 270.56] [fill={rgb, 255:red, 74; green, 144; blue, 226 }  ,fill opacity=1 ][line width=0.08]  [draw opacity=0] (12,-3) -- (0,0) -- (12,3) -- cycle    ;
\draw [shift={(245.05,66.33)}, rotate = 90.56] [fill={rgb, 255:red, 74; green, 144; blue, 226 }  ,fill opacity=1 ][line width=0.08]  [draw opacity=0] (12,-3) -- (0,0) -- (12,3) -- cycle    ;
\draw [color={rgb, 255:red, 208; green, 2; blue, 27 }  ,draw opacity=1 ] [dash pattern={on 0.84pt off 2.51pt}]  (191.33,121.8) -- (231,122.2) ;
\draw [shift={(233,122.22)}, rotate = 180.57] [fill={rgb, 255:red, 208; green, 2; blue, 27 }  ,fill opacity=1 ][line width=0.08]  [draw opacity=0] (12,-3) -- (0,0) -- (12,3) -- cycle    ;
\draw [shift={(189.33,121.78)}, rotate = 0.57] [fill={rgb, 255:red, 208; green, 2; blue, 27 }  ,fill opacity=1 ][line width=0.08]  [draw opacity=0] (12,-3) -- (0,0) -- (12,3) -- cycle    ;
\draw [color={rgb, 255:red, 74; green, 144; blue, 226 }  ,draw opacity=1 ]   (191.39,67.07) -- (231.61,108.93) ;
\draw [shift={(233,110.37)}, rotate = 226.13] [fill={rgb, 255:red, 74; green, 144; blue, 226 }  ,fill opacity=1 ][line width=0.08]  [draw opacity=0] (12,-3) -- (0,0) -- (12,3) -- cycle    ;
\draw [shift={(190,65.63)}, rotate = 46.13] [fill={rgb, 255:red, 74; green, 144; blue, 226 }  ,fill opacity=1 ][line width=0.08]  [draw opacity=0] (12,-3) -- (0,0) -- (12,3) -- cycle    ;
\draw [color={rgb, 255:red, 74; green, 144; blue, 226 }  ,draw opacity=1 ]   (190.75,108.75) -- (232.25,67.25) ;
\draw [shift={(233.67,65.83)}, rotate = 135] [fill={rgb, 255:red, 74; green, 144; blue, 226 }  ,fill opacity=1 ][line width=0.08]  [draw opacity=0] (12,-3) -- (0,0) -- (12,3) -- cycle    ;
\draw [shift={(189.33,110.17)}, rotate = 315] [fill={rgb, 255:red, 74; green, 144; blue, 226 }  ,fill opacity=1 ][line width=0.08]  [draw opacity=0] (12,-3) -- (0,0) -- (12,3) -- cycle    ;
\draw [color={rgb, 255:red, 74; green, 144; blue, 226 }  ,draw opacity=1 ]   (178.36,67.67) -- (177.97,107.67) ;
\draw [shift={(177.95,109.67)}, rotate = 270.56] [fill={rgb, 255:red, 74; green, 144; blue, 226 }  ,fill opacity=1 ][line width=0.08]  [draw opacity=0] (12,-3) -- (0,0) -- (12,3) -- cycle    ;
\draw [shift={(178.38,65.67)}, rotate = 90.56] [fill={rgb, 255:red, 74; green, 144; blue, 226 }  ,fill opacity=1 ][line width=0.08]  [draw opacity=0] (12,-3) -- (0,0) -- (12,3) -- cycle    ;
\draw [color={rgb, 255:red, 208; green, 2; blue, 27 }  ,draw opacity=1 ][line width=0.75]  [dash pattern={on 0.84pt off 2.51pt}]  (192,53.8) -- (231.67,54.2) ;
\draw [shift={(233.67,54.22)}, rotate = 180.57] [fill={rgb, 255:red, 208; green, 2; blue, 27 }  ,fill opacity=1 ][line width=0.08]  [draw opacity=0] (12,-3) -- (0,0) -- (12,3) -- cycle    ;
\draw [shift={(190,53.78)}, rotate = 0.57] [fill={rgb, 255:red, 208; green, 2; blue, 27 }  ,fill opacity=1 ][line width=0.08]  [draw opacity=0] (12,-3) -- (0,0) -- (12,3) -- cycle    ;

\end{tikzpicture}

\caption{New Reachability Model of Prisoners' Dilemma }
      \label{PD_2} 
\end{figure}

\section{Conclusion and Future Research}
In this paper, we have demonstrated the advantages of incorporating category theory into GMCR through the congruence between their respective concepts. We believe this innovative approach will offer valuable contributions in multiple domains, particularly for modeling and analyzing real-world problems using AI. Potential future research endeavors could focus on the ensuing following applications. 

\begin{enumerate}
\item Improved Modeling and Representation: The C-GMCR framework enables more abstract and expressive representations of complex decision-making problems in AI systems, leading to enhanced modeling capabilities and a deeper understanding of conflict and decision-making processes.
\item Algorithm Development: Category theory's application in C-GMCR can inspire new algorithms for conflict resolution and decision-making in AI systems. By utilizing the abstract properties and relationships of category theory, researchers can potentially design more efficient and robust algorithms for various AI applications, such as multi-agent systems, negotiation, and game theory.
\item Knowledge Representation and Transfer: Category theory's ability to represent knowledge and relationships across domains allows AI systems to benefit from a more unified and abstract representation of knowledge, enabling effective knowledge transfer and learning across problem domains.
\item Compositionality in AI: Emphasizing the principle of compositionality, category theory benefits AI systems by facilitating the construction of complex behaviors and strategies through simpler components. This results in more modular, scalable, and maintainable AI systems that can adapt effectively to new situations.
\item Interdisciplinary Connections: Category theory's application in fields like computer science, mathematics, and physics enables AI researchers to leverage interdisciplinary connections, potentially leading to novel approaches and insights in AI systems and algorithms.
\item Formal Verification and Validation: The formalism provided by category theory in the C-GMCR framework is advantageous for verifying and validating AI systems, particularly in multi-agent settings. Category theory's rigorous mathematical foundation ensures the correct and reliable behavior of AI systems in conflict resolution and decision-making tasks.
\item Explainability and Interpretability: The C-GMCR framework can potentially enhance the explainability and interpretability of AI systems in conflict resolution and decision-making. By offering a more abstract and structured representation of conflicts, category theory assists researchers and practitioners in better understanding the underlying logic and reasoning of AI systems, leading to increased transparency and trustworthiness.
\end{enumerate}

.

\end{document}